\title[Neural network hypothesis space]{The effect of the choice of neural network depth and breadth on the size of its hypothesis space}
\newcommand{\M}{n}
\newcommand{\U}{U}
\renewcommand{\L}{L}
\renewcommand{\l}{l}
\renewcommand{\P}{W}
\newcommand{\W}{V}
\newcommand{\V}{\mathcal{V}}
\renewcommand{\S}{\W^{\P}}
\newcommand{\Uf}{\U_{\l}!}
\newcommand{\Ufprod}{\prod_{\l}\Uf}
\newcommand{\Hbound}{\S/\Ufprod}
\renewcommand{\H}{\mathcal{H}}
\newcommand{\Hsize}[1]{|\H_{#1}|}
\newcommand{\Y}{\mathcal{Y}}
\newcommand{\Ysize}[1]{|\Y_{#1}|}
\newcommand{\f}{\sigma}
\begin{document}

\maketitle

\begin{abstract}
We show that the number of unique function mappings in a neural network hypothesis space is inversely proportional to $\Ufprod$, where $U_{\l}$ is the number of neurons in the hidden layer $\l$.  
\end{abstract}

\begin{keywords}
Deep learning, artificial neural networks
\end{keywords}

\section{Introduction}

A shallow neural network is a universal function approximator, if allowed an unlimited number of neurons in its single hidden layer \citep{Cybenko:1989, Hornik.etal:1989}.  Since in theory a shallow network can do anything, what is the advantage of going deep?  For one thing, deeper architectures are capable of encoding certain types of functions far more efficiently than their shallow counterparts \citep{Montufar.etal:2014, Szymanski.etal:2014, Telgarsky:2015}.  The efficiency of function encoding is important for two reasons:
\begin{itemize}
\item  deep learning can tackle problems that may be computationally intractable with the shallow approach;
\item the fewer trainable parameters in a neural network, the lower the bound on their generalisation error \citep{Vapnik1998} due to decreased generic representational power \citep{Anthony.etal:2009, Bartlett.etal:2017},
\end{itemize}

While the first point is highly relevant for practical purposes, the latter is more interesting from the theoretical point of view.  An approximation that can be made to the same level of accuracy with significantly fewer parameters is likely to give better generalisation.  However, the notion of generalisation has little meaning when the function to be approximated is fully specified, as has been the case in theoretical comparisons of shallow versus deep architectures thus far.  Also, because of this presupposing of the desired mapping function, the existing proofs do not establish that deep representations are richer in general -- only that it is exceptionally efficient at certain types of approximations.  This does not exclude the possibility that there are times where shallow representations are better.    Although at the moment the empirical evidence suggests that going deeper does not hurt \citep{Zagoruyko.etal:2016}, we do not know that this is true in general.  

In this paper we examine the capabilities of different choices of neural network architecture from a different point of view.  Instead of contrasting the model complexity required for the same accuracy on a specified task, we compare the sizes of the hypothesis spaces from different variants of neural architecture of equivalent complexity (in terms of the total number of parameters).    Our analysis is based on counting the number of equivalence classes in the set of possible states for a neural network of a particular architecture where the equivalence relation corresponds to states that lead to the same function mapping.  We prove that the upper bound on the unique number of functions a neural network can produce is $O(\S/\Ufprod)$, where $\P$ is the total number of parameters, $\W$ is the cardinality of the set of values parameters can take, and $U_{\l}$ is the number of neurons in hidden layer $\l$.  This implies that given a fixed number of parameters, architecturally it is better to impart the computational complexity of the network into its depth rather than breadth in order to increase the model's function mapping capability.       

We also provide results of a numerical evaluation in small networks, which show that the actual number of unique function mappings, although much smaller than the theoretical bound and highly dependent on the choice of activation function, is nevertheless always larger in deeper architectures. 

\section{Neural network as a hypothesis space}

A neural network with a particular architecture is a hypothesis space, denoted as $\H$.  The architecture is specified through a set of hyperparameters.  Some of these, such as  the number of inputs $U_0=\M$, are dictated by the attributes of data the network needs to work with.  Other parameters, the number of hidden layers $\L$, number of hidden neurons $\U_{\l}$ in layers $\l=1,...,\L$, and the activation function $\f$ are chosen by the user.  Once the choice of the hyperparameters is made, the input-output mapping that the network provides will depend on the values of the weights and the biases on the connections between the neurons.  In this paper we will restrict ourselves to working with single-output networks.  The function produced by such network is:
\begin{equation}
h(\mathbf{x})=\sum_{j=1}^{\U_{\L}}w_{j}y^{[\L]}_{j}+w_{0},
\end{equation}
where $w_j$ and $w_0$ are respectively the weight and bias of the single output neuron, 
\begin{equation}
y^{[\l]}_{i}=\f\left (\sum_{j=1}^{\U_{(\l-1)}}w^{[\l]}_{ij}y^{[\l-1]}_{j}+w_{i0}^{[\l]}\right )
\end{equation}
is the output of the $i^{\mbox{th}}$ neuron in layer $l$, where $\sigma$ is some activity function, $w^{[\l]}_{ij}$ is a weight on the $j^{\mbox{th}}$ input from layer $l-1$, $w_{i0}^{[\l]}$ is the bias, and $U_0=\M$ with $y^{[0]}_j=x_j$ is $j^{\mbox{th}}$ attribute of input $\mathbf{x}\in\mathcal{R}^n$.

The total number of trainable parameters (weights + biases) in a fully connected single output feed-forward network is

\begin{equation}
\P=\sum_{\l=1}^{\L}(\U_{(\l-1)}+1)\U_{\l}+\U_\L,
\end{equation}
where, again, $\U_{0}=\M$.

A particular assignment of values to the weights and biases will be referred to as network's state.  The hypothesis space $\H$ given by a neural network of a particular architecture is the set of all possible functions that this architecture is capable of producing through all possible choices of its state.  Whenever there is a need to be explicit about the architecture, we will denote the corresponding hypothesis space $\H_{\M-\U_1-\hdots-\U_{\L}}$.

\section{Equivalence classes}\label{sec_uniquefs}

For a network of $\P$ parameters, where each can take on values from a finite set $\V$ of cardinality $\W=|\V|$, there is a total of $\S$ states.  However, different states can give rise to the same function mapping, and that is the equivalence relations we are interested in.  Identical function mappings despite different states is a consequence of the fact that the order of summation over neuron's weighted inputs does not matter with respect to its overall activity.  A subset of states with the same equivalence relation forms forms an equivalence class.  We want to establish how the choice of hyperparameters affects  the number of total number of equivalence classes within all of its states, and thus the number of unique function mapping, or the size of the hypothesis space, $|\mathcal{H}|$. 

\begin{figure}[t!]
\floatconts{fig_shalloworbits}
  {\caption{Three different permutations of 4 neurons in a hidden layer that do not affect the input/output mapping - the values of the input and output weights are preserved for a given neuron (labelled and coloured to aid visualisation).}}
  {
    \subfigure{
      \label{fig_shalloworbits_a}
      \includegraphics[width=0.28\columnwidth]{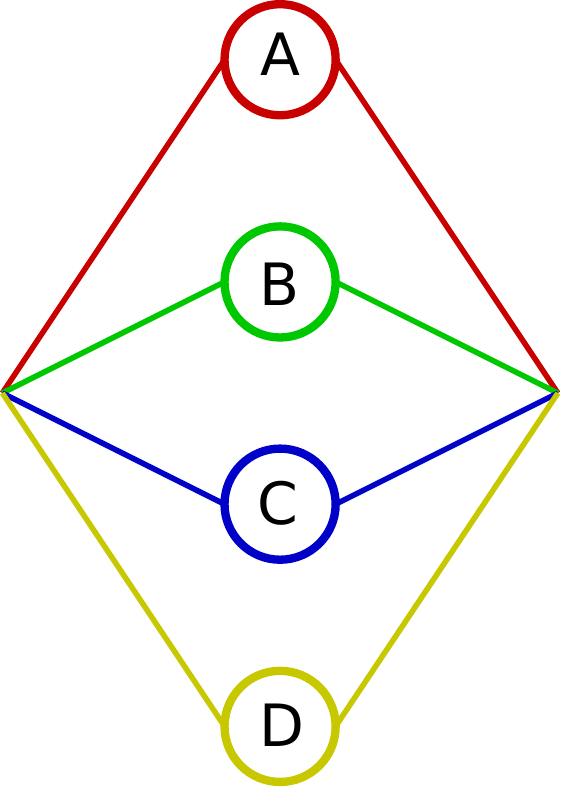}
    }\qquad
    \subfigure{
      \label{fig_shalloworbits_b}
      \includegraphics[width=0.28\columnwidth]{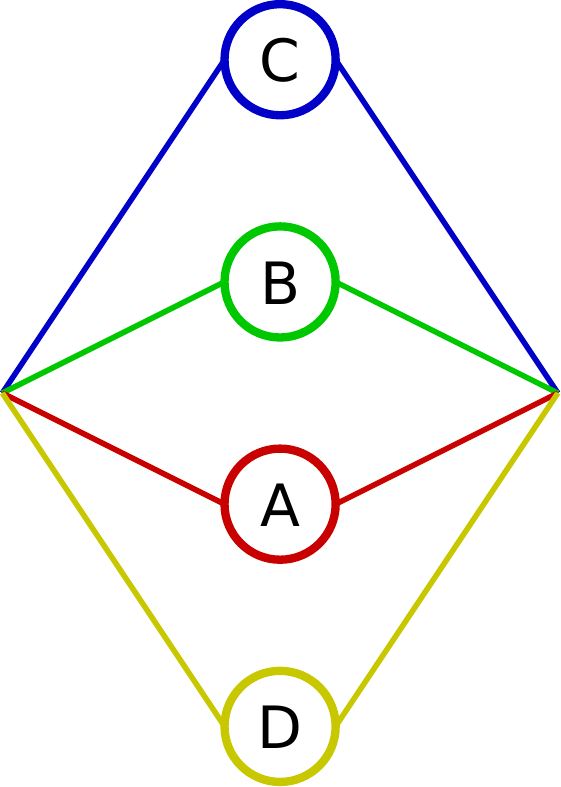}
    }\qquad
    \subfigure{
      \label{fig_shalloworbits_c}
      \includegraphics[width=0.28\columnwidth]{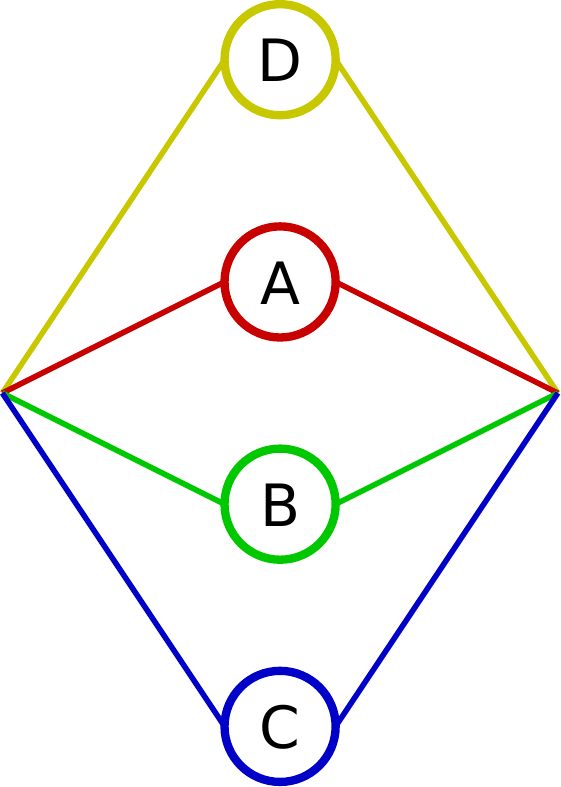}
    }\qquad
  }
\end{figure}

Let's examine a mapping from input to output of a single hidden layer as shown in \autoref{fig_shalloworbits} for an arbitrary choice of the weight values on the connections.  The change of state that does not affect the overall mapping is synonymous with a change in the positions of two (or more) neurons behaving as \emph{beads on a string}.  The neuron/bead can exchange its position with another neuron/bead, each taking along the \emph{strings} corresponding to its input and output connections.  The state of the network changes through a permutation of the weight values on the connections, but the overall computation does not.  As an example, the state change from  \autoref{fig_shalloworbits_a} to \autoref{fig_shalloworbits_b} is analogous to neuron A exchanging its position with neuron C.  \autoref{fig_shalloworbits_c} shifts the neurons with respect to \autoref{fig_shalloworbits_a} in such a way that A moves into position of B, B to C, C to D and D to A.  The neuron/bead analogy works for arbitrary number of inputs and outputs, thus also encompassing bias weights, which can be thought of as weights of a constant value input to all neurons in the layer.

Following the neuron/bead movement analogy it's fairly obvious that for a layer of $\U_\l$ neurons, and a particular choice of values on the connections, there are up to $\Uf$ permutations of the order of the summation producing the same mapping, regardless of the number of inputs and outputs of the layer.  There might be fewer than $\Uf$ permutations for certain choices of the values of the connections if the weights on neurons match in such a way that two (or more) neuron permutations produce identical state.  For instance, if all the input weights have exactly same value, and all the output weights have exactly same value, then all the neuron permutations produce exactly the same state.   

\begin{figure}[t!]
\floatconts{fig_deeporbits}
  {\caption{Four possible permutations of two consecutive layers, with two neurons per layer, that do not affect the input/output mapping - the values of the input and output weights are preserved for a given neuron (labelled and coloured to aid visualisation).}}
  {
    \subfigure{
      \label{fig_deeporbits_a}
      \includegraphics[width=0.46\columnwidth]{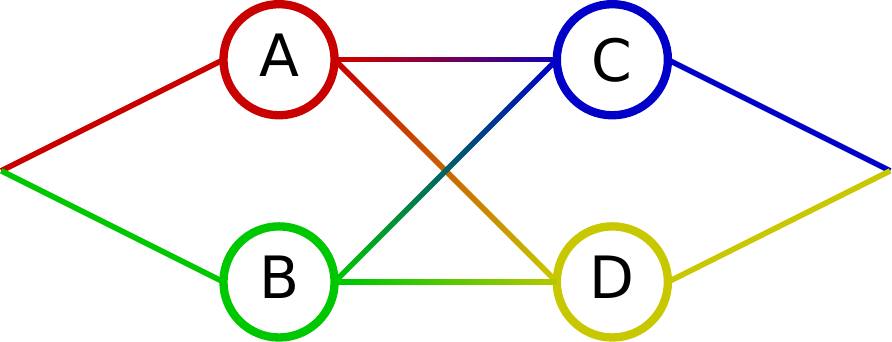}
    }\qquad
    \subfigure{
      \label{fig_deeporbits_b}
      \includegraphics[width=0.46\columnwidth]{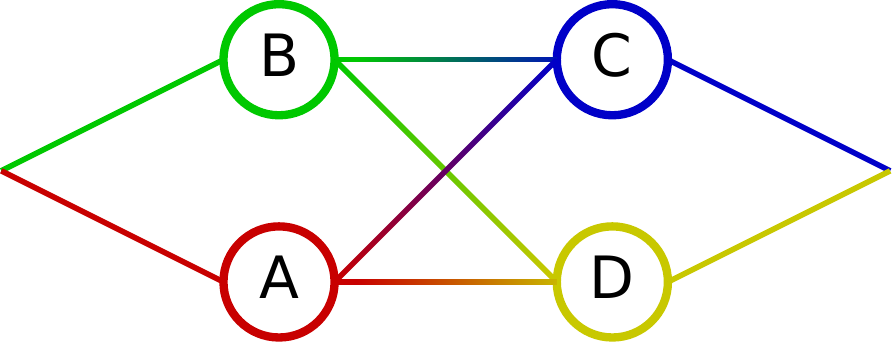}
    }\qquad
    \subfigure{
      \label{fig_deeporbits_c}
      \includegraphics[width=0.46\columnwidth]{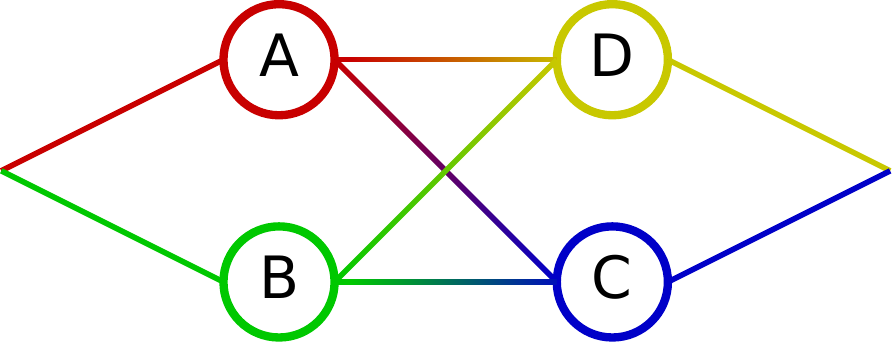}
    }\qquad
    \subfigure{
      \label{fig_deeporbits_d}
      \includegraphics[width=0.46\columnwidth]{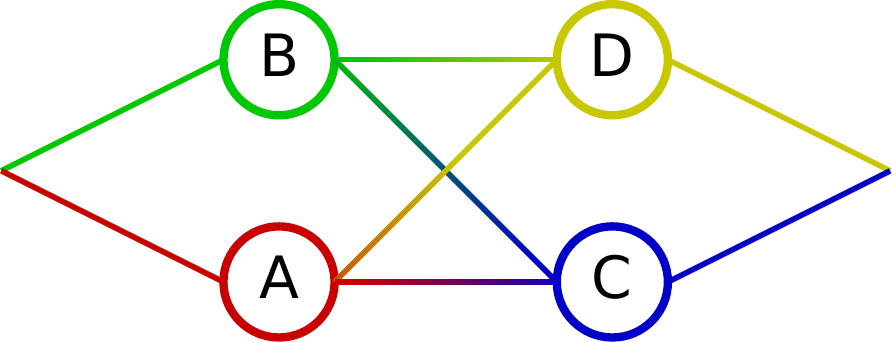}
    }\qquad
  }
\end{figure}

When accounting for the mapping capability of the combination of multiple layers, we need to account for all possible combinations of computation-preserving permutations of neurons of each layer.  \autoref{fig_deeporbits} shows all these combinations for a two hidden layer network with two neurons each.  Since each layer has two neurons, individually each gives rise to $2!$ equivalent permutations.  \autoref{fig_deeporbits_a} represents the first permutation of the neuron order in each layer, \autoref{fig_deeporbits_b} the second permutation of the first layer (from the left) along with the first permutation of the second layer, \autoref{fig_deeporbits_c} the first permutation of the first layer and second permutation of the second layer, and finally \autoref{fig_deeporbits_d} depicts the second permutation of the neuron order in both layers.  In general, depending on the choice of values of the parameters on the connections, there are up to $\S$ permutations of the neurons that preserve the function mapping of the network.  

If we take a finite set of $\W$ values, then there are $\S$ possible states for a network of an architecture with a total of $\P$ parameters.  If every state out of $\S$ was part of an equivalence class of at least $\Ufprod$ states producing the same function mapping, it would be trivially obvious that this network can give rise to no more than $\Hbound$ unique function mappings.  Situation is not that simple, since there are states (with same values on different parameters) that do not have $\Ufprod$ distinguishable permutations.  However, relying on fairly fundamental results from Group Theory \citep{Rotman:1994}, we can establish that indeed the upper bound on unique function mappings is $\Ufprod$.

\begin{restatable}[Unique Solutions]{theorem}{uniqueTheorem}
\label{thm_unique}%
% \begin{theorem}[Unique solutions]\label{thm_unique}
The upper bound on the size of the hypothesis space $\H$ of a fully connected neural network with arbitrary activity function $\f$ is $O\left (\Hbound\right )$, where $\L$ is the number of hidden layers, $\U_{\l}$ is number of neurons in layer $\l$, and $\P$ is the total number of parameters and parameters $w_{ij}\in \V$, where $|\V| = \W$ is finite.  
\end{restatable}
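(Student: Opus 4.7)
My plan is to set up a group action on the $\S$ possible network states whose orbits consist of states computing the same function, and then bound the number of orbits by Burnside's lemma.

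First I would let $G=\prod_{\l=1}^{\L} S_{\U_\l}$, where $S_{\U_\l}$ is the symmetric group on the $\U_\l$ neurons of hidden layer $\l$, and define its action on the state space $\V^\P$ by letting each $\pi_\l$ permute the layer-$\l$ neurons together with their biases, the rows of the incoming weight matrix, and the columns of the outgoing weight matrix (the bead-on-a-string action illustrated in \autoref{fig_shalloworbits} and \autoref{fig_deeporbits}). Since a neuron's activation is a sum over its inputs, such a reordering leaves every intermediate output $y_i^{[\l]}$, and hence the overall function $h$, unchanged. Therefore every function-equivalence class of states is a union of $G$-orbits, giving
\[
|\H|\;\le\;\#\{G\text{-orbits on }\V^\P\}.
\]

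Next I would count orbits by the Cauchy--Frobenius (Burnside) formula,
\[
\#\{\text{orbits}\}=\frac{1}{|G|}\sum_{g\in G}|\mathrm{Fix}(g)|,\qquad|G|=\Ufprod.
\]
The identity fixes every state and contributes $|\mathrm{Fix}(e)|=\S$. For a non-identity $g=(\pi_1,\dots,\pi_\L)$ some $\pi_\l$ is non-trivial and so has fewer than $\U_\l$ cycles. Restricting attention just to the $\U_\l$ bias parameters of layer $\l$---whose orbits under $g$ are in bijection with the cycles of $\pi_\l$---some cycle has length at least two, which forces at least two of those biases to take the same value in any fixed state. Hence the number $k(g)$ of $g$-orbits on the full parameter set satisfies $k(g)\le\P-1$, so
\[
|\mathrm{Fix}(g)|=\W^{k(g)}\le\W^{\P-1}\qquad\text{for every }g\ne e.
\]
Combining,
\[
|\H|\;\le\;\frac{1}{\Ufprod}\bigl(\S+(\Ufprod-1)\W^{\P-1}\bigr)\;\le\;\Hbound+\W^{\P-1},
\]
which for a fixed architecture is $O(\Hbound)$, as claimed.

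The step I expect to demand most care is verifying that the $G$-action is genuinely well-defined and function-preserving: a permutation of layer-$\l$ neurons simultaneously rearranges the layer-$\l$ biases, the layer-$\l$ rows of the incoming weight matrix, and the layer-$\l$ columns of the outgoing weight matrix, and one must check that these three rearrangements cohere between adjacent layers so that the composed map is an honest group action whose orbits lie inside equivalence classes. Once this is done, the orbit-counting half is a routine instance of Burnside's lemma, and nothing beyond elementary group theory \citep{Rotman:1994} is required.
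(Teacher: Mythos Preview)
Your proposal is correct and follows essentially the same route as the paper: set up the group $G=\prod_{\l}S_{\U_\l}$ acting on the $\S$ states by the neuron-permutation (``bead-on-a-string'') action, observe that $G$-orbits lie inside function-equivalence classes so that $|\H|$ is bounded by the number of orbits, and count orbits via Burnside's lemma with the identity term $\S$ dominating. Your explicit bound $|\mathrm{Fix}(g)|\le\W^{\P-1}$ for $g\ne e$ (via the bias coordinates) is exactly the content of the paper's appeal to the cycle-counting lemma $F(\tau)=\W^{t(\tau)}$ from \citet{Rotman:1994}, just made quantitative rather than left as ``the identity dominates as $\W$ grows''; and your remark that equivalence classes are \emph{unions} of orbits (so orbit count is only an upper bound) is a point the paper glosses over.
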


\noindent The proof for the theorem, provided in \autoref{apx_unique_proof}, is based on application of Burnside's Lemma from Group Theory \citep{Rotman:1994}.

\subsection{Symbolic evaluation}

In order to get a sense of the tightness of the bound on $\Hsize{}$ given in \autoref{thm_unique}, we can run a symbolic evaluation over all possible states of network with $\P$ parameters chosen from a set of $\W$ symbols.  We can evaluate and compare the symbolic output from neural networks of different architectures for all $\S$ states and determine how many of these symbolic expressions are unique.  Though only possible for small $\W$ and $\P$, it still gives an idea on the tightness of bound on $\H$ for arbitrary $\f$.  

\begin{figure}[t!]
\includegraphics[width=\textwidth]{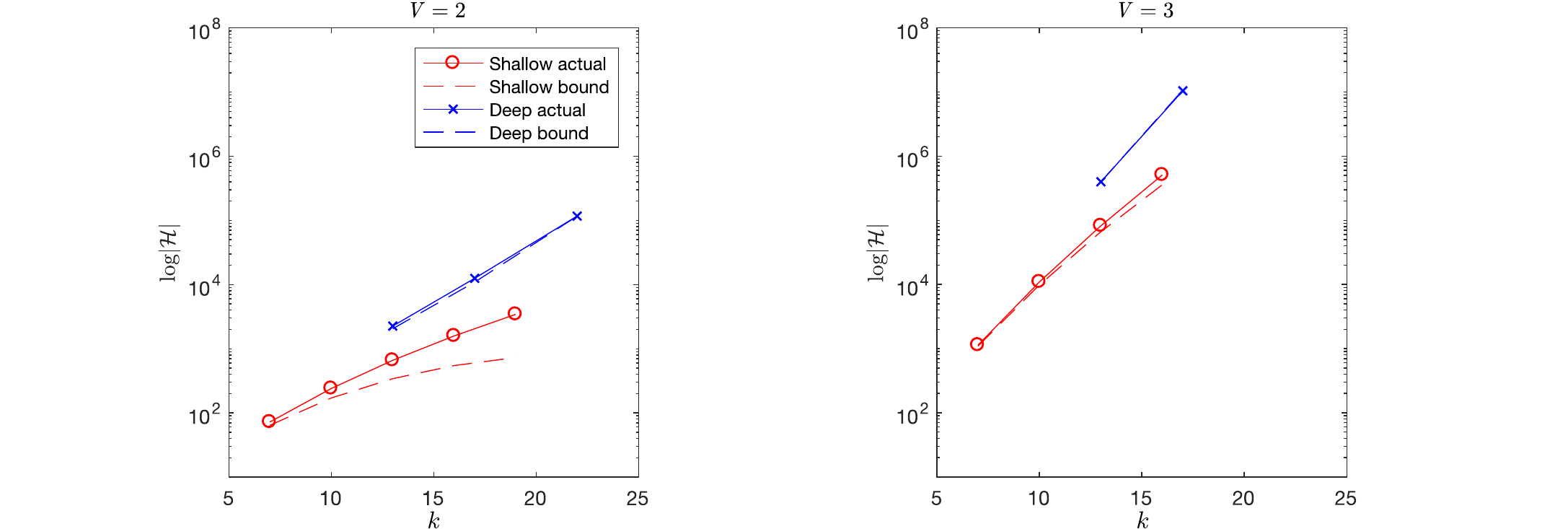}
\caption{Number of unique function mappings $\Hsize{}$ given by single input and single output neural networks against the total number of parameters $\P$; the red lines correspond to a single layer hidden network with the following architectures: $\H_{1-2}$ ($\P$=6), $\H_{1-3}$ ($\P$=9), $\H_{1-4}$ ($\P$=12), $\H_{1-5}$ ($\P$=15), $\H_{1-6}$ ($\P$=18); the blue lines correspond to a two hidden layer network with the following architectures: $\H_{1-2-2}$ ($\P$=12), $\H_{1-3-2}$ ($\P$=16), $\H_{1-3-3}$ ($\P$=21); there are two plots for $\W$=2 and $\W$=3}\label{fig_orbits_numerical}
\end{figure} 

\autoref{fig_orbits_numerical} shows the exact number (solid line) and the bound (dash line) of unique symbolic solutions for function mapping over unspecified function $\f$ plotted against  the number of parameters in a single-layer and two-hidden-layer neural network.  Note that the bound gets tighter as $\W$ increases.

\subsection{Numerical evaluation}\label{sec_numerical}

\begin{table}[htp]
\caption{$\Hsize{}$ and $\Ysize{}$ for two hypothesis spaces of $\P=12$ parameters for symbolic as well as other choices of activation functions.}
\label{tbl_numerical_fs}
\begin{center}
	\begin{tabular}{|c|l|r|r|r|r|}
	\hline
    & Network & & & & \\
    $W$ &  architecture & $\Hsize{}$ & $\Ysize{\mbox{ReLU}}$ & $\Ysize{\mbox{tanh}}$ & $\Ysize{\mbox{sigmoid}}$ \\
    \hline 
    \multirow{2}{*}{2} & $\H_{1-4}$ & 330 & 41 & 25 & 125 \\  
    & $\H_{1-2-2} $ & 1128 & 147 & 67 & 573\\
    \hline 
    \multirow{2}{*}{3} & $\H_{1-4}$ & 27405 & 277 & 321 & 2121 \\  
    & $\H_{1-2-2} $ & 132921 & 689 & 2165 & 16169\\
    \hline
	\end{tabular}
\end{center}
\end{table}    

To get a bit of an idea on the number of possible mappings in a practical scenario, we need a numerical evaluation over specific range of inputs and a choice of activation function.  We can evaluate all possible function mappings of hypothesis space $\H$ by considering model's output over a range of inputs for each $h\in\H$.  For single-input  single-output networks we evaluate $y_{i}=h(x_i)$ over $1001$ points from regularly samples range $x_i\in[-1,1]$ .  For each hypothesis $h\in\H$ we compute the output vector $\mathbf{y}=[y_1,...,y_{1001}]$.  Some of the hypotheses that give different functions symbolically might give identical mappings over the chosen range of input in the numerical evaluation.  Hence, we select the set of unique vectors (to within $1\times 10^{-4}$ Euclidean distance) to form a set of mappings $\mathbf{y}\in\Y_{\f}$, which corresponds to $\H$ for the choice of activation $\f$ over the selected range of input.   
\autoref{tbl_numerical_fs} shows the symbolically evaluated number of unique hypotheses against the number of unique vectors after numerical evaluation for different choice of activation functions.  The evaluated hypothesis spaces are $\H_{1-4}$ and $\H_{1-2-2}$, each with a total number of $\P=12$ parameters.  Numerical evaluation was done for $\W=2$ where $\V=\{-1,1\}$, and $W=3$ where $\V=\{1,0,1\}$.

It is hardly surprising that the choices for input range, allowed parameter values and activation function have a significant impact on the size of the corresponding hypothesis space.  The possibility of inputs and parameters of same value with opposite sign introduces additional symmetries in the internal computations of the network, thus reducing the number of unique function mappings.   ReLU introduces many extra symmetries, because it produces the same output for all negative activity.  So does tanh, because of its symmetry about 0.  Sigmoid gives rise to the richest hypothesis space.  

Note that, although for a given choice of $\f$ the number of unique functions is far below the upper bound given in \autoref{thm_unique}, the deeper/fewer neurons per layer hypothesis is always richer than the shallower/more neurons per layer version of the neural network.  

\section{Discussion}

We have show that upper bound on the size of the hypothesis space given by a neural network is dictated by the the number of neurons per layer.  For the same number of parameters deeper architecture (fewer neurons per more layers) gives a hypothesis space capable of producing more function mappings than a shallower one (with more neurons per fewer layers).

% Acknowledgments---Will not appear in anonymized version
%\acks{We thank a bunch of people.}

\bibliography{references}

\appendix
\begin{appendices}
\section{Proof of \autoref{thm_unique}}\label{apx_unique_proof}

The proof is a pretty straight forward application of Burnside's Lemma to count the number of equivalence classes of the states producing same function mapping in a neural network of particular architecture.  All the definitions and lemmas used here are proven in \citet{Rotman:1994}.

\uniqueTheorem*

\begin{proof}
Let's denote as $X$ the set of all possible states of a neural network of $\P$ parameters.  For our context, $|X|=\S$. To bound the size of $\H$, we can partition $X$ into equivalence classes of identical hypotheses and count the number of such classes. For the sake of completeness, we included some definitions.

\begin{definition}[Group; \cite{Rotman:1994}, pg. 12]\label{def_group}
A group is a nonempty set $G$ equipped with an associative operation $*$ containing an element $e$ such that:
\begin{enumerate}[label=(\roman*)]
\item $e * a = a = a * e$ for all $a\in G$
\item for every $a\in G$, there is an element $b\in G$ with $a * b=e = b *a $.
\end{enumerate}
\end{definition}

By \hyperref[def_group]{Definition \ref*{def_group}} the set of bijections $X\times X$ (or permutations) of the $\P$ parameters that do not affect the overall function mapping of the network is a group.  The operation $*$ is a permutation.  Indeed, we can apply a permutation to a permutation and obtain another permutation.  The identity permutation $e$ is a permutation that maps every element onto itself.  Following the explanations from \autoref{sec_uniquefs} we can see that group $G$ consist of $\Ufprod$ parameter permutations   isomorphic to the product of the permutations of the order of $\U_{\l}$ neurons in each hidden layer $\l=1,...,\L$.   

\begin{definition}[G-set; \cite{Rotman:1994}, pg. 55]
If X is a set and G is a group, then X is a G-set if there is a function $\alpha: G\times X\mapsto X$ (called an action), denoted by $\alpha: (g,x)\mapsto gx$, such that:
\begin{enumerate}[label=(\roman*)]
\item $e * x = x$ for all $x\in X$; and
\item $g(hx)=(gh)x$ for all $g,h \in G$ and $x\in X$. 
\end{enumerate}
\end{definition}

$X$ is a G-set, because permutations from $G$ re-order the values of parameters of the network creating another state in $X$.  The action is the re-ordering of the parameter values dictated by the permutation $g\in G$.  Condition $(i)$ is satisfied by the identity permutation, which will map network state $x$ to itself.  Condition $(ii)$ is satisfied by the fact that application of several permutations is associative. 

\begin{definition}[G-orbit; \cite{Rotman:1994}, pg. 56] 
If $X$ is a G-set and $x\in X$, then the \textbf{G-orbit} of $x$ is:
\begin{equation*}
\mathcal{O}(x)=\{gx: g\in G\}\subset X
\end{equation*}  
\end{definition}

The G-orbits we are interested in are the subsets of $X$ created by application of all neuron swapping permutations $g\in G$ to all states $x\in X$.  These subsets partition $X$, each containing the states that produce the same hypothesis.  We need to determine how many G-orbits there are in $X$.

\begin{lemma}[Burnside's Lemma; \cite{Rotman:1994}, pg. 58]\label{lmm_burnside}
If X is a finite $G$-set and $N$ is the number of $G$-orbits of $X$, then
\begin{equation*}
N=(1/|G|)\sum_{\tau\in G}F(\tau),  
\end{equation*}
where, for $\tau\in G$, $F(\tau)$ is the number of $x\in X$ fixed by $\tau$.
\end{lemma}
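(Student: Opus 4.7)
The plan is to prove Burnside's Lemma by the classical double-counting argument applied to the set of fixed incidences $S = \{(g,x) \in G \times X : gx = x\}$. Counting $|S|$ in two different orders, combined with the Orbit--Stabilizer Theorem, will yield the formula immediately.

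First, I would count $|S|$ by fixing the group element. For each $\tau \in G$, the definition of $F(\tau)$ gives exactly $F(\tau)$ elements $x \in X$ with $\tau x = x$, so $|S| = \sum_{\tau \in G} F(\tau)$. This is the right-hand side of the claim, modulo the factor $1/|G|$.

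Second, I would count $|S|$ by fixing the $X$-element. For each $x \in X$, the number of $g \in G$ fixing $x$ is the size of the stabilizer $G_x = \{g \in G : gx = x\}$, which one checks to be a subgroup of $G$ using the two G-set axioms. Hence $|S| = \sum_{x \in X} |G_x|$. The one nontrivial input is the Orbit--Stabilizer Theorem, $|G_x| \cdot |\mathcal{O}(x)| = |G|$, which I would establish by defining the map $\phi : G/G_x \to \mathcal{O}(x)$ sending the left coset $gG_x$ to $gx$; well-definedness and injectivity both come from the chain $gG_x = g'G_x \iff g^{-1}g' \in G_x \iff gx = g'x$, and surjectivity is immediate from the definition of $\mathcal{O}(x)$. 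Since the excerpt already cites Rotman for all group-theoretic prerequisites, I would invoke this by reference rather than reprove it.

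Third, I would substitute $|G_x| = |G|/|\mathcal{O}(x)|$ and regroup the sum by orbit. Because the orbits partition $X$ and each $x$ lying in a given orbit $\mathcal{O}$ contributes $1/|\mathcal{O}|$, each orbit contributes exactly $|\mathcal{O}| \cdot (1/|\mathcal{O}|) = 1$ to $\sum_{x \in X} 1/|\mathcal{O}(x)|$, so the total is $N$. Combining the two counts gives $\sum_{\tau \in G} F(\tau) = |G| \cdot N$, and dividing by $|G|$ produces the stated identity. The only real obstacle in the whole argument is the Orbit--Stabilizer Theorem; once that is granted, everything else is bookkeeping on finite sums.
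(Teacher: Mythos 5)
Your double-counting argument over $S=\{(g,x): gx=x\}$, combined with the Orbit--Stabilizer Theorem (which you correctly sketch via the bijection $gG_x \mapsto gx$), is complete and correct; this is the classical proof of Burnside's Lemma. The paper itself does not prove this lemma but simply imports it from \citet{Rotman:1994}, and your argument is essentially the standard one given there, so there is nothing to reconcile.
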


We have established that when $\W$ is finite, the set of network states $X$ is a finite set, and it is a G-set acted on by permutations of network parameters resulting from changing the order of summation of neuron output in network layers, where $|G|=\prod_l U_l!$.  $N$ is the number of G-orbits in $X$ created by actions of permutations from $G$, and thus it's the number of unique function mappings that a neural network can produce.  The last thing we need to evaluate in order to get $N$ is $F(\tau)$.  

In our context $F(\tau)$ specifies how many unique states a permutation $\tau\in G$ of $\P$ elements can create when all possible choices of $w_{ij}$ for the $\P$ elements are considered.  The answer is given by the following lemma found in \citet{Rotman:1994} (we changed the notation and analogy from colours to parameter values)

\begin{lemma}[\cite{Rotman:1994}, pg. 60]\label{lmm_colours}
Let $\V$ be a set with $|\V|=\W$, and let $G$ be a subset of all possible permutation of $\P$ elements.  If $\tau\in G$, then $F(\tau)=\W^{t(\tau)}$, where $t(\tau)$ is the number of cycles occurring in the complete factorisation of $\tau$.
\end{lemma}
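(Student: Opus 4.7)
The plan is to reduce the count of fixed configurations to a simple coloring argument, keyed on the disjoint-cycle decomposition of $\tau$. Concretely, I would identify $X$ with the set of functions $\phi:\{1,\dots,\P\}\to\V$ that assign a value from $\V$ to each of the $\P$ parameter slots. The permutation $\tau$ acts on such a function by $(\tau\phi)(i)=\phi(\tau^{-1}(i))$, so $\phi$ is fixed by $\tau$ precisely when $\phi(i)=\phi(\tau^{-1}(i))$ for every $i$, equivalently when $\phi\circ\tau=\phi$.

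Next I would invoke the complete factorisation $\tau=c_1 c_2\cdots c_{t(\tau)}$ into disjoint cycles (including fixed points written as $1$-cycles, so that $t(\tau)$ counts every cycle). The orbits of the cyclic group $\langle\tau\rangle$ acting on $\{1,\dots,\P\}$ are exactly the supports of these cycles, and they partition $\{1,\dots,\P\}$ into $t(\tau)$ blocks. Iterating the condition $\phi(i)=\phi(\tau^{-1}(i))$ along a cycle $c_k=(i_1\;i_2\;\cdots\;i_r)$ forces $\phi(i_1)=\phi(i_2)=\cdots=\phi(i_r)$; conversely, any $\phi$ that is constant on each cycle clearly satisfies $\phi\circ\tau=\phi$. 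Thus a fixed function is determined by, and free to choose, exactly one value in $\V$ per cycle.

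Since there are $t(\tau)$ cycles and each independently receives one of $|\V|=\W$ values, a direct multiplication principle yields
\begin{equation*}
F(\tau)=|\{\phi\in X : \tau\phi=\phi\}|=\W^{t(\tau)},
\end{equation*}
which is the claim. The argument does not use anywhere that $G$ is the full symmetric group, only that $\tau$ is a permutation of the $\P$ slots, which matches the hypothesis that $G$ is a subset of permutations of $\P$ elements.

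The only real pitfall is bookkeeping: one must be explicit that the ``complete factorisation'' includes fixed points as $1$-cycles, since otherwise elements with $\tau(i)=i$ would silently contribute a factor of $\W$ without being counted in $t(\tau)$, and the identity permutation would incorrectly give $F(e)=1$ instead of $F(e)=\W^{\P}$. Once that convention is fixed, the proof is essentially a one-line consequence of the partition of $\{1,\dots,\P\}$ into $\tau$-cycles and the fact that a $\tau$-invariant colouring is exactly a colouring constant on cycles.
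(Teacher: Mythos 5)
Your proof is correct and is the standard argument: identifying a network state with a function $\phi:\{1,\dots,\P\}\to\V$, a state is fixed by $\tau$ exactly when $\phi$ is constant on each cycle of the complete factorisation of $\tau$, giving $\W^{t(\tau)}$ independent choices. The paper states this lemma without proof (citing Rotman and only illustrating cycle notation), and your argument --- including the necessary care that fixed points count as $1$-cycles, consistent with the paper's example $(1,2)(3,4,5)(6)(7)$ having four cycles so that $F(e)=\W^{\P}$ --- is precisely the proof the citation refers to.
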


Every permutation can be expressed as a factor of disjoint cycles.  For example, a permutation written as $(1,2)(3,4,5)(6)(7)$ denotes the following reordering of seven elements in $4$ cycles:
\begin{itemize}
\item element $2$ swaps with element $1$;
\item element $3$ goes into place of element $4$, which in turns goes into place of element $5$, which goes into place of element $3$;
\item element $6$ is fixed, its position remains unchanged,
\item element $7$ is fixed.
\end{itemize}

Since by \hyperref[lmm_colours]{Lemma \ref*{lmm_colours}} $F(\tau)=\W^{t(\tau)}$, where $t(\tau)$ is the number of cycles, the sum in \autoref{lmm_burnside} will be dominated by the permutation $\tau\in G$ with the largest number of cycles.  For a permutation of $\P$ elements, the largest possible number of cycles is $t(\tau)=\P$, and it's given by the identity permutation, $\tau=e$.  Hence, as $W$ increases, we have 
\begin{equation*}
N=O\left(\W^{t(e)}/|G|)\right)=O\left (\S/\prod_{\l}\Uf\right )
\end{equation*} 

Given that the set $X$ has $N=O\left (\Hbound\right )$ G-orbits with respect to all combinations of neuron-swapping permutations in all individual neural networks, we have an upper bound on the number of functions a neural network of a particular architecture can generate.  Thus $\Hsize{}\le O\left (\Hbound\right )$.
\end{proof}
  
The tightness of the bound $\Hsize{}\le O\left (\Hbound\right )$ depends on the choice of activation function $\f$ and the set of parameter values $\V$.  During numerical evaluation, as shown in \autoref{sec_numerical}, extra symmetries might arise inside the neural network, which can result in different G-orbits in $X$ producing the same function mapping.  

\end{appendices}

\end{document}